\title{Learning Stationary Markov Processes with Contrastive Adjustment}
\author{
  Ludvig Bergenstråhle\textsuperscript{1},
  Jens Lagergren\textsuperscript{2}, and
  Joakim Lundeberg\textsuperscript{1,}%
  \thanks{Correspondence to: \href{mailto:joakim.lundeberg@scilifelab.se}{joakim.lundeberg@scilifelab.se}}
  \\
  {\small
    \textsuperscript{1}Division of Gene Technology,
    \textsuperscript{2}Division of Computational Science and Technology
  }
  \\
  {\small KTH Royal Institute of Technology}
}
\begin{document}

\maketitle

\begin{abstract}
  We introduce a new optimization algorithm, termed \emph{contrastive adjustment}, for learning Markov transition kernels whose stationary distribution matches the data distribution.
  Contrastive adjustment is not restricted to a particular family of transition distributions and can be used to model data in both continuous and discrete state spaces.
  Inspired by recent work on noise-annealed sampling, we propose a particular transition operator, the \emph{noise kernel}, that can trade mixing speed for sample fidelity.
  We show that contrastive adjustment is highly valuable in human-computer design processes, as the stationarity of the learned Markov chain enables local exploration of the data manifold and makes it possible to iteratively refine outputs by human feedback.
  We compare the performance of noise kernels trained with contrastive adjustment to current state-of-the-art generative models and demonstrate promising results on a variety of image synthesis tasks.
\end{abstract}

\section{Introduction}

\begin{figure}[t]
  \centering
  \includegraphics[width=\linewidth]{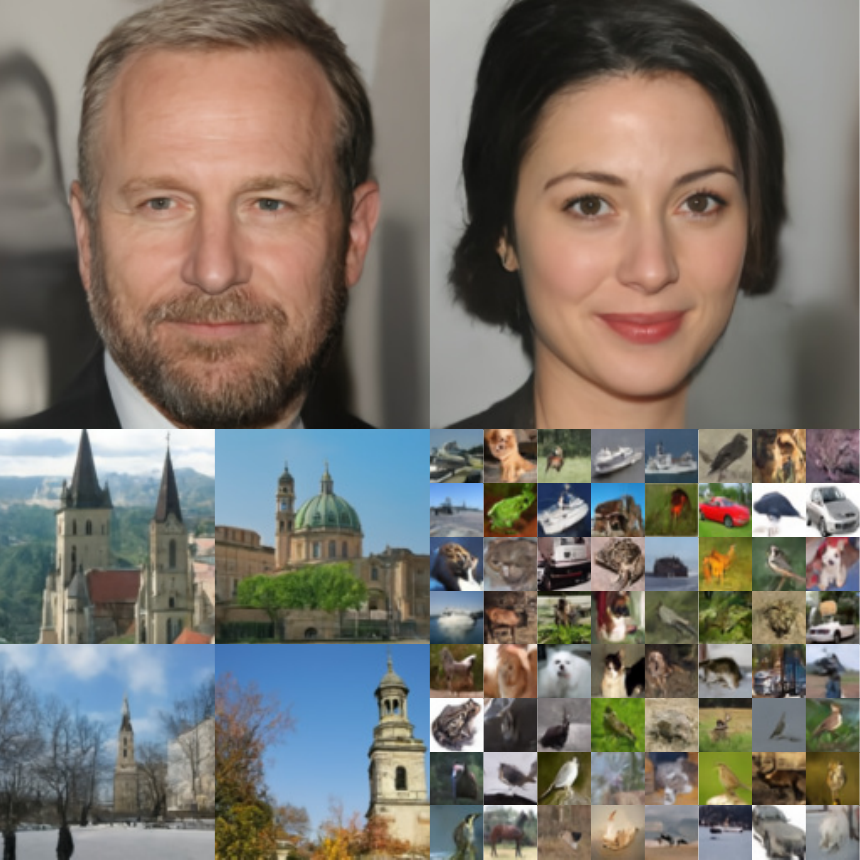}%
  \caption{
    Unconditional samples from noise kernels trained with contrastive adjustment.
    Top: CelebA-HQ ($256 \times 256$).
    Bottom left: LSUN Church ($128 \times 128$).
    Bottom right: CIFAR-10 ($32 \times 32$).
  }%
  \label{fig:ca:samples}
\end{figure}

Generative models have emerged as a powerful tool with applications spanning a wide range of subject areas, from image, text, and audio synthesis in the creative arts to data analysis and molecule design in the sciences.
The literature on generative models encompasses many different methods, including those based on variational autoencoders~\cite{kingma2013auto}, autoregression~\cite{oord2016conditional}, normalizing flows~\cite{rezende2015variational}, adversarial optimization~\cite{goodfellow2014generative}, and Markov chain Monte Carlo (MCMC)~\cite{song2019generative}.
While often very general and applicable to many domains, different methods have distinct strengths and limitations.
For example, adversarial methods can typically generate high-quality samples but are difficult to train on diverse datasets, while autoregressive models are easier to train but slow to sample from as they require sequential generation.

In this paper, we introduce contrastive adjustment, an optimization algorithm for learning arbitrary Markov transition operators whose stationary distribution matches an unknown data distribution.
The learned transition operators obey detailed balance and can be used to quickly generate variants of existing data, making them exceptionally suited for human-in-the-loop design processes.
Additionally, inspired by recent work~\cite{song2019generative,ho2020denoising} on noise-annealed sampling, we propose noise kernels, a specific transition model that can be learned with contrastive adjustment and allows for efficient de novo synthesis by modeling the data distribution over multiple noise levels.
We find that noise kernels trained with contrastive adjustment produce diverse, high-quality samples on image datasets of different resolutions (\cref{fig:ca:samples}).
To further demonstrate their flexibility, we allow noise kernels to be conditioned on partial data and illustrate how they can be used to inpaint missing image regions without retraining.
We find that contrastive adjustment and noise kernel transition models are straightforward to implement, can be applied to both discrete and continuous data domains, and show promising performance on a variety of image synthesis tasks.

\section{Method}

\begin{figure}[t]
  \centering
  \includegraphics[width=\linewidth]{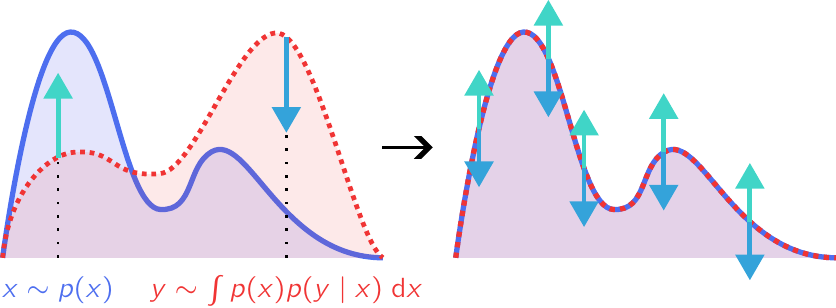}%
  \caption{
    Conceptual illustration of contrastive adjustment.
    Left: The transition kernel is adjusted based on the density difference between the data distribution (blue solid line) and its one-step perturbation induced by the current kernel (red dotted line).
    Right: In steady state, expected upward and downward adjustments cancel out. Assuming symmetric adjustments, this happens when the distributions of perturbed and unperturbed data coincide.
  }%
  \label{fig:ca:high-level}
\end{figure}

The goal of contrastive adjustment is to learn a reversible Markov transition kernel $p(y \mid x) = p(X_{t+1} = y \mid X_{t} = x)$ whose stationary distribution matches an unknown data distribution $p(x)$.
At optimality, the transition distribution adheres to the detailed balance criterion,
\begin{equation}
  p(x) p(y \mid x) = p(y) p(x \mid y).
\end{equation}
Contrastive adjustment learns the transition distribution by iterative adjustments;
when $B_{xy} = p(x)p(y \mid x) - p(y)p(x \mid y) > 0$ for some values $x,y$, we decrease $p(y \mid x)$ and increase $p(x \mid y)$.
Note that since the data distribution $p(x)$ is unknown, we cannot evaluate $B_{xy}$ directly.
The central idea of contrastive adjustment is that directionally accurate updates to the transition distribution can still be estimated by Monte Carlo sampling.
Specifically, we take a sample from the one-step process $(x,y) \sim p(x)p(y \mid x)$ and decrease the probability of the forward transition $p(y \mid x)$ while increasing the probability of the backward transition $p(x \mid y)$ (\cref{alg:ca:high-level}).
Assuming symmetric updates and disregarding normalization constraints, the expected adjustment of $p(y \mid x)$ is negative when $B_{xy} > 0$ and positive when $B_{xy} < 0$.
Incremental updates proceed until $p(x)p(y \mid x) = p(y)p(x \mid y)$ for all $x,y$, whereby downward and upward adjustments cancel out in expectation (\cref{fig:ca:high-level}).

In most practical applications, $p_{\theta}(y \mid x)$ will be parameterized by a neural network with weights $\theta$.
While several different adjustment functions are possible, a natural choice is updating $\theta$ in the direction or anti-direction of the gradient of the log transition probabilities.
Noting that the expected update to $\theta$ from downward adjustments then is
\begin{align}
  \mathbb{E}_{p(x)p(y \mid x)}\left[-\nabla_{\theta} \log p_{\theta}(y \mid x) \right]
  & \nonumber \\
  & \hspace{-4.0cm}
    = -\iint p(x)p_{\theta}(y \mid x) \nabla_{\theta} \log p_{\theta}(y \mid x)  \; \mathrm{d}x \mathrm{d}y
  \\
  & \hspace{-4.0cm}
    = -\iint p(x) \nabla_{\theta} p_{\theta}(y \mid x)  \; \mathrm{d}x \mathrm{d}y
  \\
  & \hspace{-4.0cm}
    = -\nabla_{\theta} \iint p(x) p_{\theta}(y \mid x) \; \mathrm{d}x \mathrm{d}y
    = -\nabla_{\theta} 1 = 0,
\end{align}
we can implement gradient-based contrastive adjustment without performing the downward adjustment step, as described in \cref{alg:ca:gradient}.

\begin{algorithm}[t]
  \caption{High-level description}%
  \label{alg:ca:high-level}
  \begin{algorithmic}
    \STATE{\bfseries Input:} data distribution~$p(x)$, transition distribution~$p(y \mid x)$%
    \REPEAT%
    \STATE~Sample $x \sim p(x)$%
    \STATE~Sample $y \sim p(y \mid x)$%
    \STATE~Decrease $p(y \mid x)$%
    \STATE~Increase $p(x \mid y)$%
    \UNTIL{convergence}%
  \end{algorithmic}
\end{algorithm}

\begin{algorithm}[t]
  \caption{Gradient-based adjustments}%
  \label{alg:ca:gradient}
  \begin{algorithmic}
    \STATE{\bfseries Input:} data distribution~$p(x)$, transition model~$p_{\theta}(y \mid x)$, weights~$\theta$, learning rate~$\eta$%
    \REPEAT%
    \STATE~Sample $x \sim p(x)$%
    \STATE~Sample $y \sim p_{\theta}(y \mid x)$%
    \STATE~$g \gets \nabla_{\theta} \log p_{\theta}(x \mid y)$%
    \STATE~$\theta \gets \theta + \eta g$%
    \UNTIL{convergence}%
  \end{algorithmic}
\end{algorithm}

\subsection{Noise kernel transition models}%
\label{sec:ca:noise-kernel}

The transition model determines the dynamics of the learned Markov chain and is thus a crucial component of contrastive adjustment.
It should be able to capture the modes of the data but also allow for efficient sampling.
One of the key insights of~\textcite{song2019generative} is that it is possible to trade off between these two objectives by modeling the data over different noise levels.
Specifically, at higher noise levels, the data distribution is flatter, allowing the transition kernel to take larger steps without falling off the data manifold.
At lower noise levels, the mixing speed is slower but sample fidelity higher.
Consequently, one can attain both efficient mixing and high sample fidelity by switching between high and low noise regimes.

Motivated by this idea, we propose to learn a specific class of transition distributions, which we will refer to as noise kernels, for modeling noisy representations of the data.
To define a noise kernel, let $p(z)$ be the data distribution and $p(x \mid z)$ a distribution that adds noise to a non-noisy example $z$.
Suppose the marginal density for a transition from $x$ to $y$ at some step in the chain can be modeled by the joint distribution
\begin{align}
  p(z, x, y) = p(z)p(x \mid z)p(y \mid z, x),
\end{align}
where the last term is the conditional transition probability from $x$ to $y$, defined so that
\begin{align}
  p(x \mid z) p(y \mid z, x) = p(y \mid z) p(x \mid z, y).
  \label{eq:noise-kernel:db}
\end{align}
Under this framework, the transition distribution is given by the marginal
\begin{align}
  p(y \mid x) = \int p(z \mid x) p(y \mid z, x) \; \mathrm{d}z,
  \label{eq:noise-kernel:transition}
\end{align}
where $p(z \mid x)$ is a denoising distribution.
Note that the transition distribution~\eqref{eq:noise-kernel:transition} adheres to the detailed balance criterion, since
\begin{align}
  p(x)p(y \mid x)
  & = \int p(z)p(x \mid z)p(y \mid z, x) \; \mathrm{d}z
    \\
  & = \int p(z)p(y \mid z)p(x \mid z, y) \; \mathrm{d}z
    \\
  & = p(y)p(x \mid y),
  \label{eq:noise-kernel:unconditional-db}
\end{align}
where the second equality follows from \cref{eq:noise-kernel:db}.

In general, we cannot solve \cref{eq:noise-kernel:transition} for the denoising distribution $p(z \mid x)$ analytically, but we can replace it with a learnable distribution $r_{\theta}(z \mid x)$ parameterized by weights $\theta$, giving us the approximate transition model
\begin{align}
  p_{\theta}(y \mid x)
  & =
    \int r_{\theta}(z \mid x) p(y \mid z, x) \; \mathrm{d}z
    .
    \label{eq:noise-kernel:model}
\end{align}
By selecting $r_{\theta}$ appropriately, we can ensure the integral in \cref{eq:noise-kernel:model} is tractable and that $p_{\theta}$ is identifiable by the distribution parameters of $r_{\theta}$ (cf. \cref{sec:nk:continuous,sec:nk:categorical}).
The distribution $r_{\theta}$ can then be used as a denoiser to reconstruct samples in observation space.

Since noise kernels are equilibrium models according to \cref{eq:noise-kernel:unconditional-db}, they are learnable by contrastive adjustment.
In this case, the data distribution in \cref{alg:ca:gradient} is the noisy distribution $p(x) = \int p(z)p(x \mid z) \; \mathrm{d}z$.
While it should be noted that there could be multiple transition models $p_{\theta}$ consistent with detailed balance, we did not find this to be a problem in practice.
As an alternative to contrastive adjustment, noise kernels can also be learned by optimizing $r_{\theta}$ with a reconstruction loss or by minimizing the Kullback-Leibler divergence between \cref{eq:noise-kernel:transition} and \cref{eq:noise-kernel:model}.
However, we generally found contrastive adjustment to produce better results.

\subsection{Non-equilibrium transitions}%
\label{sec:ca:non-equilibrium}

As discussed in \cref{sec:ca:noise-kernel}, efficient mixing and high sample fidelity can be achieved by switching between high and low noise regimes.
While contrastive adjustment learns a reversible Markov chain, it is straightforward to extend \cref{eq:noise-kernel:model} to allow for non-equilibrium transitions over different noise levels at inference time.
Specifically, instead of requiring the conditional transition distribution to obey the detailed balance criterion~\eqref{eq:noise-kernel:db}, we now let it depend on the step $t$ of the chain and require only that
\begin{align}
  \int p^{t}(x \mid z) p^{t}(y \mid z, x) \; \mathrm{d}z
  = p^{t+1}(y \mid z),
  \label{eq:noise-kernel:non-equilibrium-condition}
\end{align}
where $p^{t}(\cdot \mid z)$ is a time-dependent noise distribution.
The transition model is then defined as
\begin{align}
  p^{t}_{\theta}(y \mid x) = \int r^{t}_{\theta}(z \mid x) p^{t}(y \mid z, x) \; \mathrm{d}z,
\end{align}
where we let the denoising model $r^{t}_{\theta}(z \mid x) = r_{\theta}(z \mid x, \beta_{t})$ depend on the noise level $\beta_{t}$ at step $t$.

Training is the same as in \cref{sec:ca:noise-kernel}, except that a separate transition model is learned for each noise level.
In practice, we share the weights $\theta$ of the denoising model $r_{\theta}$ over an infinite number of noise levels $\beta \in (0, 1)$ and, similar to~\cite{ho2020denoising}, condition $r_{\theta}$ on a sinusoidal position embedding of $\beta$.
During training, we sample $\beta$ uniformly from the range $(0, 1)$ for each example.

\subsection{Continuous noise kernels}%
\label{sec:nk:continuous}

We are now ready to define concrete examples of noise kernels for continuous and categorical data.
In the continuous case, we will use an isotropic Gaussian noise distribution:
\begin{align}
  \label{eq:nk:continuous:noise}
  p^{t}(x \mid z)
  & = \mathcal{N}\left(x \mid \alpha_{t} z, \beta_{t} I \right)
  ,
\end{align}
where $\beta_{t}$ and $\alpha_{t}$ determine the noise level at step $t$.
To improve the stability of the kernel, we introduce a dependency on the previous state in the conditional transition distribution:
\begin{equation}
  p^{t}(y \mid z, x) = \mathcal{N}(
  y \mid
  w x + a_{t+1}z
  ,
  b_{t+1}I
  )
  \label{eq:nk:continuous:conditional}
  ,
\end{equation}
where $w$ is a scalar hyperparameter and we have used $I$ to denote the identity matrix.
The scalars $a_{t+1}$ and $b_{t+1}$ can be derived from the conditions~\eqref{eq:noise-kernel:db} and~\eqref{eq:noise-kernel:non-equilibrium-condition} for equilibrium and non-equilibrium transitions, respectively.
In the non-equilibrium case, we can make use of the following proposition:
\begin{restatable}{prop}{gaussianmarginal}%
  \label{prop:gaussian-marginal}
  Consider the joint distribution $p(X,Y) = p(X)p(Y \mid X)$, where $p(X) = N(X \mid \mu, \sigma^{2})$ and $p(Y \mid X) = N(Y \mid aX + b, c^{2})$.
  Then the marginal distribution of $Y$ is given by $p(Y) = N(Y \mid a \mu + b, c^{2} + a^{2}\sigma^{2})$.
\end{restatable}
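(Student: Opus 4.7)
The cleanest route is to avoid integrating densities and instead use a coupling representation. The plan is to introduce an auxiliary standard normal $Z \sim N(0,1)$ independent of $X$, and write $Y = aX + b + cZ$. Since $X \mid \cdot$ has the prescribed conditional distribution $N(aX + b, c^{2})$, this construction reproduces the joint law $p(X,Y)$, so it suffices to identify the marginal law of $aX + b + cZ$.

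From there, two short steps finish the argument. First, $aX + b + cZ$ is an affine combination of two independent Gaussians, hence Gaussian itself; this is the only nontrivial structural fact needed, and it follows either from the closure of the Gaussian family under convolution or from a one-line characteristic function computation, $\mathbb{E}[e^{it(aX+b+cZ)}] = e^{itb}\,\mathbb{E}[e^{itaX}]\,\mathbb{E}[e^{itcZ}]$, which yields a Gaussian characteristic function. Second, the mean and variance can be read off by linearity and by independence of $X$ and $Z$:
\begin{align*}
  \mathbb{E}[Y] &= a\mathbb{E}[X] + b + c\mathbb{E}[Z] = a\mu + b, \\
  \mathrm{Var}(Y) &= a^{2}\mathrm{Var}(X) + c^{2}\mathrm{Var}(Z) = a^{2}\sigma^{2} + c^{2}.
\end{align*}
Combining the two gives $p(Y) = N(a\mu + b,\, c^{2} + a^{2}\sigma^{2})$.

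I do not expect a real obstacle here: the statement is a textbook identity, and the coupling trick sidesteps any density manipulation. If one instead preferred the direct approach $p(Y) = \int p(X)p(Y\mid X)\,\mathrm{d}X$, the only mildly delicate step would be completing the square in the exponent to separate the $X$-integral (which gives a constant) from the remaining Gaussian kernel in $Y$; this is routine but the most error-prone part of that alternative. The coupling proof avoids it entirely, so I would present that version.
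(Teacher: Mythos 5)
Your proof is correct, but it takes a genuinely different route from the paper. You use a coupling representation, $Y = aX + b + cZ$ with $Z \sim N(0,1)$ independent of $X$, and then invoke closure of the Gaussian family under independent affine combinations (via convolution or a one-line characteristic-function computation), reading off the mean and variance by linearity and independence. The paper instead performs the direct density calculation you mention only as an alternative: it writes out the joint density, completes the square in $X$, integrates out the resulting Gaussian kernel, and identifies the remaining exponent in $Y$ as the claimed normal density. Your approach is shorter and sidesteps the error-prone algebra entirely, which is a real advantage for this statement in isolation. What the paper's computation buys, however, is the explicit expanded form of the joint exponent (its equation~\eqref{eq:gaussianmarginal:joint}), which is reused verbatim in the proof of \cref{prop:gaussian-db} to exhibit the symmetry of $p(x_i \mid z)p(y_i \mid z, x_i)$ in $x_i$ and $y_i$; the coupling argument does not produce that byproduct, so if you adopted your version you would need to derive that expression separately there.
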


Since~\eqref{eq:nk:continuous:noise} and~\eqref{eq:nk:continuous:conditional} are diagonal Gaussians, we can apply \cref{prop:gaussian-marginal} dimension-wise with $X = x$, $Y = y$, $\mu = \alpha_{t}z$, $\sigma^{2} = \beta_{t}$, $a = w$, $b = a_{t+1} z$, and $c^{2} = b_{t+1}$ to obtain
\begin{align}
  w \alpha_{t}z + a_{t+1}z = \alpha_{t+1}z
  \Rightarrow
    a_{t+1} = \alpha_{t+1} - w \alpha_{t}
  \label{eq:nk:continuous:alpha}
  \\
  b_{t+1} + w^{2} \beta_{t} = \beta_{t+1}
  \Rightarrow
  b_{t+1} = \beta_{t+1} - w^{2} \beta_{t}
  \label{eq:nk:continuous:beta}
  .
\end{align}
As stated by the following proposition, we do not need to treat the equilibrium case separately:
\begin{restatable}{prop}{gaussiandb}%
  \label{prop:gaussian-db}
  If the noise level is constant over time, so that $\beta_{t} = \beta$ and $\alpha_{t} = \alpha$ for all $t$, then the continuous noise kernel defined by \cref*{eq:nk:continuous:noise,eq:nk:continuous:conditional,eq:nk:continuous:alpha,eq:nk:continuous:beta} satisfies the detailed balance criterion~\eqref{eq:noise-kernel:db}.
\end{restatable}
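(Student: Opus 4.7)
The plan is to reduce to the scalar case, substitute the equilibrium values of $a_{t+1}$ and $b_{t+1}$, and then verify that both sides of the detailed balance equation~\eqref{eq:noise-kernel:db} describe the same bivariate Gaussian joint density over $(x,y)$ conditional on $z$.

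First, I observe that both \cref{eq:nk:continuous:noise,eq:nk:continuous:conditional} factor across coordinates because their covariance matrices are scalar multiples of the identity, so it suffices to treat $x, y, z$ as scalars. Substituting $\alpha_{t} = \alpha_{t+1} = \alpha$ and $\beta_{t} = \beta_{t+1} = \beta$ into \cref{eq:nk:continuous:alpha,eq:nk:continuous:beta} yields $a_{t+1} = \alpha(1-w)$ and $b_{t+1} = \beta(1-w^{2})$.

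Next, I apply \cref{prop:gaussian-marginal} to the factorization $p(x \mid z)\,p(y \mid z, x)$ with $\mu = \alpha z$, $\sigma^{2} = \beta$, $a = w$, $b = \alpha(1-w)z$, and $c^{2} = \beta(1-w^{2})$ to obtain the marginal $p(y \mid z) = \mathcal{N}(y \mid \alpha z, \beta)$, which has the same functional form as $p(x \mid z)$. Computing the cross-moment $\mathbb{E}[XY \mid z] - \mathbb{E}[X \mid z]\mathbb{E}[Y \mid z] = w\operatorname{Var}(X \mid z) = w\beta$ then identifies the joint $(X, Y) \mid z$ as a bivariate Gaussian with mean $(\alpha z, \alpha z)$, equal marginal variances $\beta$, and covariance $w\beta$. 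This joint density is manifestly symmetric under the swap $(x, y) \mapsto (y, x)$.

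Finally, the right-hand side $p(y \mid z)\,p(x \mid z, y)$ uses $p(x \mid z, y) = \mathcal{N}(x \mid wy + \alpha(1-w)z, \beta(1-w^{2}))$, obtained from \cref{eq:nk:continuous:conditional} with the roles of $x$ and $y$ interchanged. Re-running the computation above on this reversed factorization gives the same symmetric bivariate Gaussian, so the two sides of \cref{eq:noise-kernel:db} agree pointwise. The main obstacle is really just bookkeeping: the step that does the work is verifying that the specific equilibrium choices $a_{t+1} = \alpha(1-w)$ and $b_{t+1} = \beta(1-w^{2})$ are precisely those forcing the two marginals and the cross-covariance to coincide, which is exactly the content of \cref{prop:gaussian-marginal}.
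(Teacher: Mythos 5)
Your proof is correct and follows essentially the same strategy as the paper's: reduce to a single coordinate, substitute the equilibrium values $a_{t+1} = \alpha(1-w)$ and $b_{t+1} = \beta(1-w^{2})$, and show that the joint density $p(x \mid z)\,p(y \mid z, x)$ is symmetric under swapping $x$ and $y$. The only (immaterial) difference is how the symmetry is verified: you match the mean vector and covariance matrix of the bivariate Gaussian, while the paper expands the exponent of the joint density explicitly and observes that the quadratic form is symmetric in $x_{i}$ and $y_{i}$.
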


Proofs of Propositions~\ref{prop:gaussian-marginal} and~\ref{prop:gaussian-db} are provided in \cref{app:proofs}.

To derive the transition model $p^{t}_{\theta}$, we define
\begin{align}
  \label{eq:nk:continuous:denoising}
  r^{t}_{\theta}(z \mid x)
  & = \mathcal{N}\left(z \mid \mu_{\theta}(x, \beta_{t}), \sigma^{2}_{\theta}(x, \beta_{t}) I\right)
    ,
\end{align}
where $\mu_{\theta}$ and $\sigma_{\theta}^{2}$ are deep neural networks parameterized by weights $\theta$.
We can now use \cref{prop:gaussian-marginal} again but with $X = z$, $Y = y$, $\mu = \mu_{\theta}(x, \alpha_{t})$, $\sigma^{2} = \sigma_{\theta}^{2}(x, \alpha_{t})$, $a = a_{t+1}$, $b = wx$, and $c^{2} = b_{t+1}$ to compute \cref{eq:noise-kernel:model}, giving us the transition model
\begin{align}
  p^{t}_{\theta}(y \mid x) = \mathcal{N}( y \mid m_{t+1}, s_{t+1}^{2} I),
\end{align}
where
\begin{align}
  m_{t+1}
  & = w x + (\alpha_{t+1} - w \alpha_{t}) \mu_{\theta}(x, \alpha_{t})
    \label{eq:nk:continuous:mean}
  \\
  s_{t+1}^{2}
  & = \beta_{t+1} - w^{2}\beta_{t} + (\alpha_{t+1} - w \alpha_{t})^{2} \sigma^{2}_{\theta}(x, \alpha_{t})
    \label{eq:nk:continuous:variance}
    .
\end{align}

\subsection{Categorical noise kernels}%
\label{sec:nk:categorical}

For categorical data, we follow~\cite{austin2021structured} and use an absorbing state noise process.
We assume the data is $D$-dimensional and that each dimension can take on one of $K$ values.
Then, the noisy data has $K+1$ categories, where the last category is an absorbing state, indicating that the underlying value has been masked.
For every element $i$ of the data, the noise distribution replaces its value $z_{i}$ by the absorbing state $K+1$ with probability $\beta_{t}$:
\begin{equation}
  p^{t}(x_{i} \mid z)
  =
    \text{Cat}\left(
    x_{i} \mid (1 - \beta_{t}) \mathbf{1}_{z_{i}} + \beta_{t} \mathbf{1}_{K+1}
    \right),
    \label{eq:nk:categorical:noise}
\end{equation}
where $\mathbf{1}_{a}$ denotes the one-hot encoding of a value $a$ in $K+1$ categories.
As in \cref{sec:nk:continuous}, we let the conditional transition distribution depend on the previous state to improve the stability of the chain:
\begin{align}
  p^{t}(y_{i} \mid z, x)
  & =
    \text{Cat}\big(
    y_{i} \mid
    (1 - b_{t+1}) w \mathbf{1}_{x_{i}} +
    \nonumber \\
  & \hspace{-0.8cm}
    + (1 - b_{t+1}) (1 - w) \mathbf{1}_{z_{i}} + b_{t+1} \mathbf{1}_{K+1}
    \big),
    \label{eq:nk:categorical:conditional}
\end{align}
where $w$ is a scalar hyperparameter controlling the mixing speed of the kernel.
We select $b_{t+1}$ to ensure the discrete analogs of conditions~\eqref{eq:noise-kernel:db} and~\eqref{eq:noise-kernel:non-equilibrium-condition}.
For the non-equilibrium case, note that \cref{eq:nk:categorical:noise,eq:nk:categorical:conditional} mean that $y_{i}$ either takes on the value $z_{i}$ or $K+1$, and that the latter happens with probability
\begin{align}
  p^{t}(y_{i} = K + 1 \mid z) = (1 - b_{t+1})w \beta_{t} + b_{t+1}.
\end{align}
We therefore must have
\begin{align}
  (1 - b_{t+1})w \beta_{t} + b_{t+1} = \beta_{t+1}
  & \nonumber \\
  & \hspace{-3.4cm} \Rightarrow
    b_{t+1} = (\beta_{t+1} - w \beta_{t}) / (1 - w \beta_{t}).
    \label{eq:nk:categorical:beta}
\end{align}
Similar to the noise kernel described in \cref{sec:nk:continuous}, we do not need to treat the equilibrium case separately:
\begin{restatable}{prop}{categoricaldb}%
  \label{prop:categorical-db}
  If the noise level is constant over time, so that $\beta_{t} = \beta$ for all $t$, then the categorical noise kernel defined by \cref*{eq:nk:categorical:noise,eq:nk:categorical:conditional,eq:nk:categorical:beta} satisfies the detailed balance criterion~\eqref{eq:noise-kernel:db}.
\end{restatable}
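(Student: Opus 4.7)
The plan is to reduce to a per-dimension calculation and then to a finite case analysis. Since both the noise distribution~\eqref{eq:nk:categorical:noise} and the conditional transition~\eqref{eq:nk:categorical:conditional} are defined coordinate-wise and are conditionally independent across dimensions given $z$ (resp.\ $z,x$), both sides of \eqref{eq:noise-kernel:db} factor as $\prod_i p(x_i \mid z)p(y_i \mid z, x)$ versus $\prod_i p(y_i \mid z) p(x_i \mid z, y)$. So it suffices to verify the identity for a single coordinate $i$; I will drop the subscript and write $z, x, y$ for $z_i, x_i, y_i$. Under the equilibrium assumption $\beta_t = \beta$, equation~\eqref{eq:nk:categorical:beta} specializes to $b := b_{t+1} = \beta(1-w)/(1-w\beta)$, and a direct simplification gives $1 - b = (1 - \beta)/(1 - w\beta)$; I would record these two identities upfront since they are the only algebraic facts needed.

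Next I would enumerate the four cases indexed by $(x, y) \in \{z, K{+}1\}^2$, which exhaust the support of $p(x\mid z)p(y\mid z,x)$ (and likewise on the right-hand side). In the diagonal cases $x = y$, both sides of detailed balance are manifestly equal because swapping the roles of $x$ and $y$ in \eqref{eq:nk:categorical:conditional} leaves the expression unchanged, so no computation is needed. The only substantive case is $x \neq y$: say $x = z$ and $y = K{+}1$ (the reverse is identical by symmetry). Here $p(x \mid z) = 1 - \beta$ and $p(y = K{+}1 \mid z, x = z) = b$, so the left-hand side is $(1 - \beta)b$; on the right, $p(y = K{+}1 \mid z) = \beta$ and $p(x = z \mid z, y = K{+}1) = (1-b)(1-w)$, giving $\beta(1-b)(1-w)$. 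Substituting the expressions for $b$ and $1-b$ derived above, both sides collapse to $\beta(1-\beta)(1-w)/(1-w\beta)$, closing this case.

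The only genuine obstacle is keeping the bookkeeping straight in the asymmetric case: one must notice that when $x = K{+}1$ the term $(1-b)w \mathbf{1}_{x}$ in~\eqref{eq:nk:categorical:conditional} collapses onto the absorbing-state atom rather than producing a new category, so that $y$ still takes values only in $\{z, K{+}1\}$ and the support of $p(\cdot \mid z, x)$ on both sides matches that of $p(\cdot \mid z)$. Once this is observed, the verification reduces to the single algebraic identity above, and the proposition follows.
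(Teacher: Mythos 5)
Your proposal is correct and follows essentially the same route as the paper's proof: factor the kernel over coordinates, reduce to the four outcomes $(x_i,y_i)\in\{z_i,K{+}1\}^2$, dismiss the diagonal cases by symmetry, and verify the single off-diagonal identity $(1-\beta)b = \beta(1-b)(1-w) = \beta(1-\beta)(1-w)/(1-w\beta)$ using $b=\beta(1-w)/(1-w\beta)$. The only (minor) addition is your explicit remark that the $(1-b)w\mathbf{1}_{x_i}$ term collapses onto the absorbing state when $x_i=K{+}1$, which the paper leaves implicit.
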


A proof of Proposition~\ref{prop:categorical-db} is provided in \cref{app:proofs}.

We model the denoising distribution as
\begin{align}
  \label{eq:nk:categorical:denoising}
  r^{t}_{\theta}(z_{i} \mid x) = \text{Cat}\left(z_{i} \mid f^{i}_{\theta}(x, \alpha_{t})\right)
  ,
\end{align}
where $f^{i}_{\theta}$ is a deep neural network parameterized by weights $\theta$ with output indexed by $i$.
Inserting \cref{eq:nk:categorical:conditional,eq:nk:categorical:denoising} into the discrete version of \cref{eq:noise-kernel:model}, we get the transition model
\begin{align}
  p^{t}_{\theta}(y_{i} \mid x)
  & =
    \text{Cat}\big(
    y_{i} \mid
    (1 - b_{t+1}) w \mathbf{1}_{x_{i}} +
    \nonumber \\
  & \hspace{-1.2cm}
    + (1 - b_{t+1}) (1 - w) f^{i}_{\theta}(x, \alpha_{t}) + b_{t+1} \mathbf{1}_{K+1}
    \big)
    .
\end{align}

\section{Experiments}%
\label{sec:experiments}

\begin{figure*}[t]
  \centering
  \includegraphics[width=\linewidth]{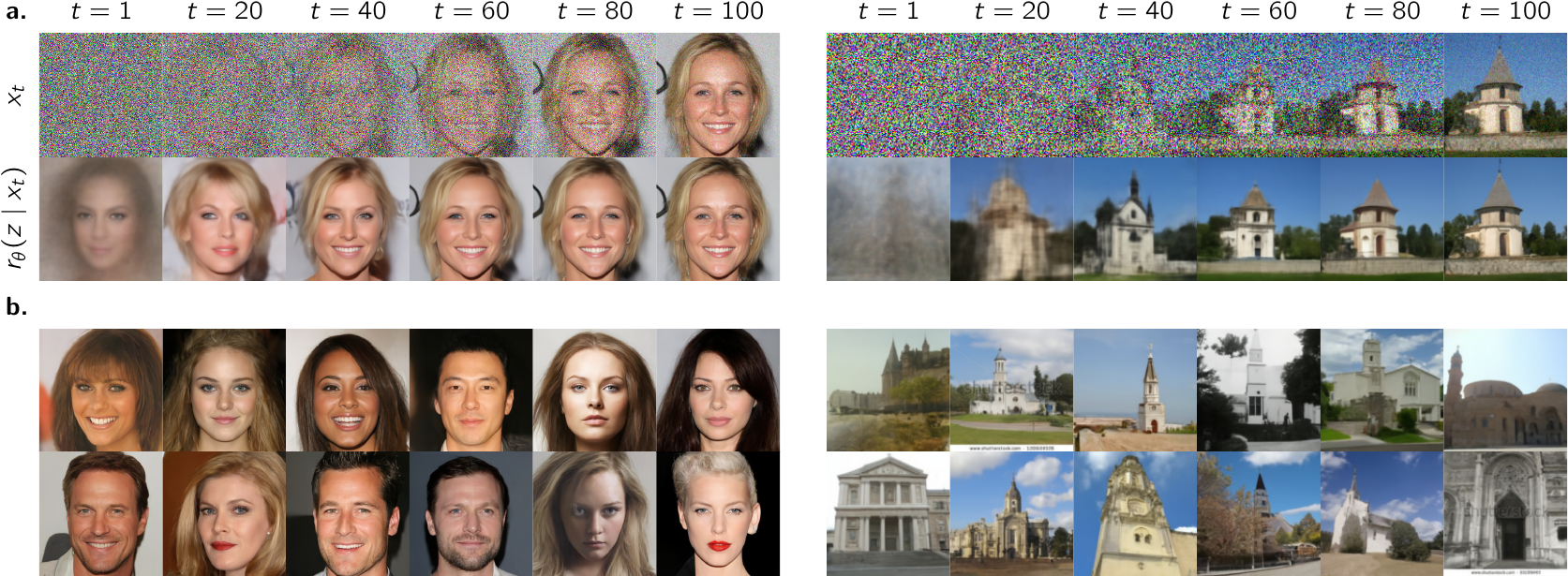}%
  \caption{%
    Data synthesis from isotropic noise.
    \textbf{a.}~Examples of the generative process.
    The noise level is annealed from $\beta_{0}=1.0$ to $\beta_{T}=0.01$ over $T=100$ steps.
    Top row: Noisy state $x_{t}$ at step $t$.
    Bottom row: Denoised state $\mathbb{E}[r_{\theta}(z \mid x_{t})]$.
    \textbf{b.}~Representative samples.
    Left: CelebA-HQ ($256 \times 256$). Right: LSUN Church ($128 \times 128$).
  }%
  \label{fig:imagesynthesis}
\end{figure*}

We evaluate the performance of noise kernels optimized by contrastive adjustment on a number of image synthesis tasks.
The transition models are defined as in \cref{sec:nk:continuous,sec:nk:categorical}, and we parameterize $r_{\theta}$ by a U-Net~\cite{ronneberger2015u} architecture similar to the one used by~\textcite{hoogeboom2021argmax}.
For continuous noise kernels, we use $w=0.5$ and set $\alpha_{t} = 1 - \beta_{t}$.
For categorical noise kernels, we use $w=0.95$.
The models are trained as in \cref{alg:ca:gradient} but with minibatch gradient descent using the Adam optimizer~\cite{kingma2014adam}.
Implementation details are provided in \cref{app:implementation}.

\subsection{Image synthesis: continuous data}

\begin{table}[t]
  \begin{center}
    \begin{tabular}{lc}
      \toprule
      Method & FID \\
      \midrule
      NCSN++~\cite{song2020score} & $2.20$ \\
      StyleGAN2-ADA~\cite{karras2020training} & $2.92$ \\
      DDPM~\cite{ho2020denoising} & $3.17$ \\
      NCSNv2~\cite{song2020score} & $10.87$ \\
      NK-CA & $18.27$ \\
      NCSN~\cite{song2019generative} & $25.32$ \\
      DenseFlow~\cite{grcic2021densely} & $34.90$ \\
      EBM~\cite{du2019implicit} & $38.2$ \\
      PixelIQN~\cite{ostrovski2018autoregressive} & $49.46$ \\
      \bottomrule
    \end{tabular}
  \end{center}
  \caption{%
    Fréchet Inception Distance~\cite{heusel2017gans} on CIFAR-10.
    Reported result for noise kernel trained with contrastive adjustment (NK-CA) is the average of 5 independent runs with a generated dataset size of \num{50000}.
    Reported results for other models are taken from the original papers.
  }%
  \label{tab:results:cifar}
\end{table}

To generate images, we sample $x_{0}\sim\mathcal{N}(0,I)$ and run the Markov chain forward while linearly annealing the noise level from $\beta_{0}=1.0$ to $\beta_{T}=0.01$ over $T=100$ steps.
The generated images are denoised by taking the mean of $r_{\theta}(z \mid x_{T})$.

We compare the image synthesis performance of noise kernels trained with contrastive adjustment to other state-of-the-art generative models on CIFAR-10 (\cref{tab:results:cifar}).
Contrastive adjustment attains an average FID score of $18.27$ over five independent runs, which is lower than the top performing normalizing flow, autoregressive, and energy-based models, but higher than the best score-based, diffusion, and adversarial models.
Uncurated samples are presented in \cref{app:imagesynthesis:cifar10}.

To investigate the performance of contrastive adjustment on higher-resolution images, we additionally train noise kernels on the LSUN Church (cropped and resized to $128 \times 128$) and CelebA-HQ (resized to $256 \times 256$) datasets.
\Cref{fig:imagesynthesis} shows traces of the sampling process and representative samples from our models.
Overall, we find generated images to be diverse and well-composed.
On LSUN Church, our model attains a competitive FID score of $9.40$.
Uncurated samples are presented in \cref{app:imagesynthesis:celebahq,app:imagesynthesis:lsunchurch}.

\subsection{Image synthesis: categorical data}

Categorical data is non-ordinal and therefore typically more challenging to model than continuous or ordered discrete data, as the relationships between categories is not self-evident.
To keep consistency with our evaluations on continuous data, we again evaluate the performance of contrastive adjustment on CIFAR-10 but note that categorical data models are not ideal for image datasets, since pixel intensity values are ordered.
While it would be possible to adapt noise kernel models for ordered discrete data, for example by restricting the transition distribution to only allow transitions between adjacent categories, we leave these adaptations for future work.

To make it easier for the model to learn how categories are related, we discretize pixel intensities into $10$ categories.
Samples are generated by setting all elements of $x_{0}$ to $\mathbf{1}_{K+1}$ and running the Markov chain for $T=500$ steps while linearly annealing $\beta_{t}$ from $1.0$ to $0.5$.
Comparing generated samples to the discretized dataset, our model attains a FID score of $14.76$.
Uncurated samples are presented in \cref{app:imagesynthesis:cifar10:categorical}.

\subsection{Variant generation}

\begin{figure*}[t]
  \centering
  \includegraphics[width=\linewidth]{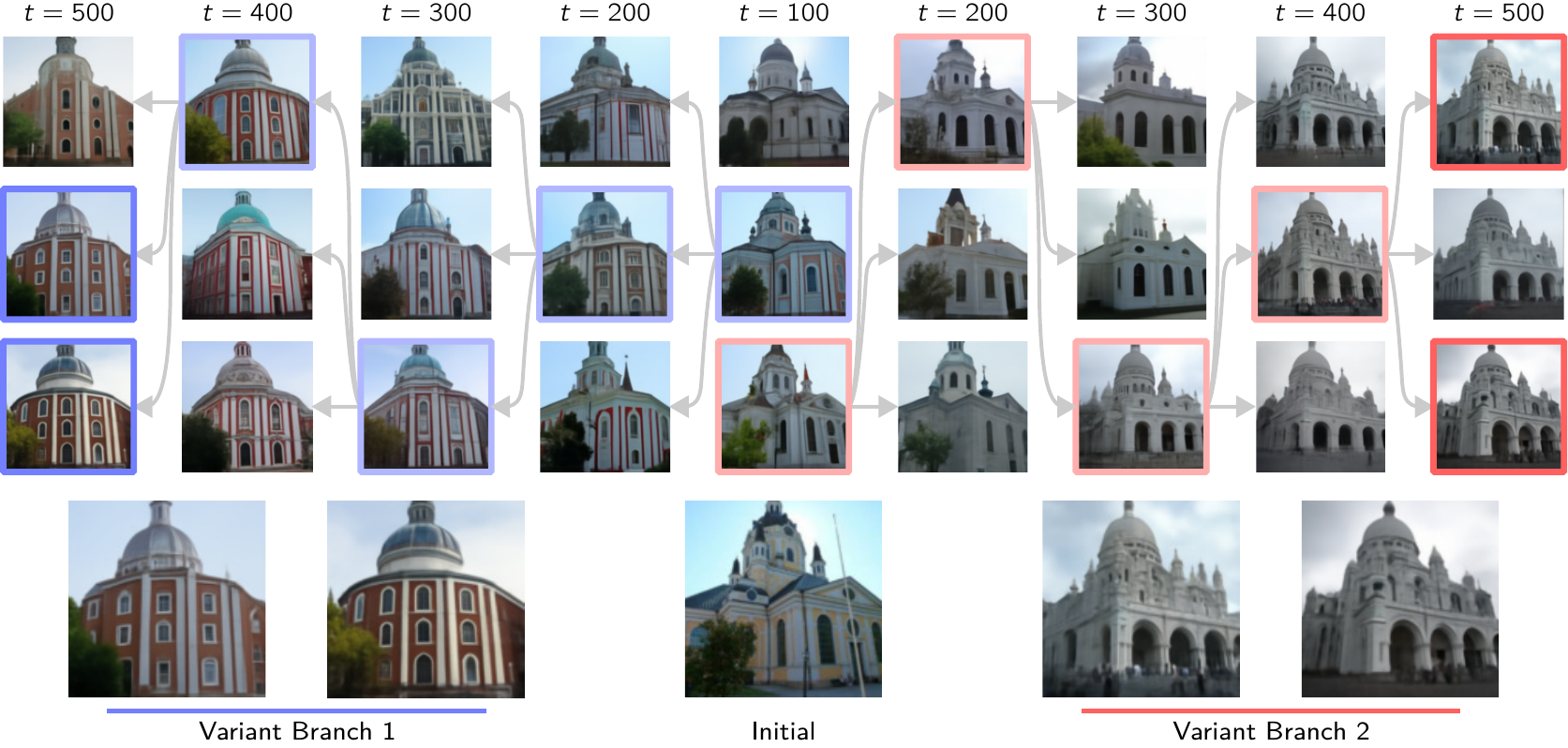}%
  \caption{%
    Variant generation on LSUN Church ($128 \times 128$).
    Candidate outputs are generated by running the chain forward $100$ steps, beginning with an example from the dataset.
    One of the candidates is selected to generate the next set of candidates, and the process is repeated until the desired output is found.
    Two different sample trajectories are shown that select for different visual attributes, illustrating how the local data manifold can be traversed by iterative refinement in order to create diverse outputs.
  }%
  \label{fig:variants}
\end{figure*}

Variant generation allows human input to guide the sampling process by iteratively refining synthesized outputs.
Such design processes could be highly valuable in many different fields, including not only the production of art and music but also, for example, the discovery of new molecules for drug development.
As an MCMC-based generative model, contrastive adjustment is especially suited for this task, as it allows for local exploration of the data manifold by sequential sampling.
Variants are generated by starting from an example $x_{0}$ and running the Markov chain forward a desired number of steps.
The noise scale $\beta_{t}$ can be adjusted to control the amount of variation added in each step.

We exemplify variant generation using contrastive adjustment in \cref{fig:variants}, where we start with an image from the LSUN Church dataset and run the Markov chain for $T=100$ steps at a constant noise level $\beta = 0.2$ in order to generate variants of the original image.
After each generation, we select one of the outputs to create new samples from and repeat the process.
To illustrate the diversity of variants that can be produced in this way, we generate two distinct trajectories that select for different visual attributes in the images.
Additional variant generation traces are provided in \cref{app:imagevariants}.

\subsection{Inpainting}

\begin{figure}[t]
  \centering
  \includegraphics[width=\linewidth]{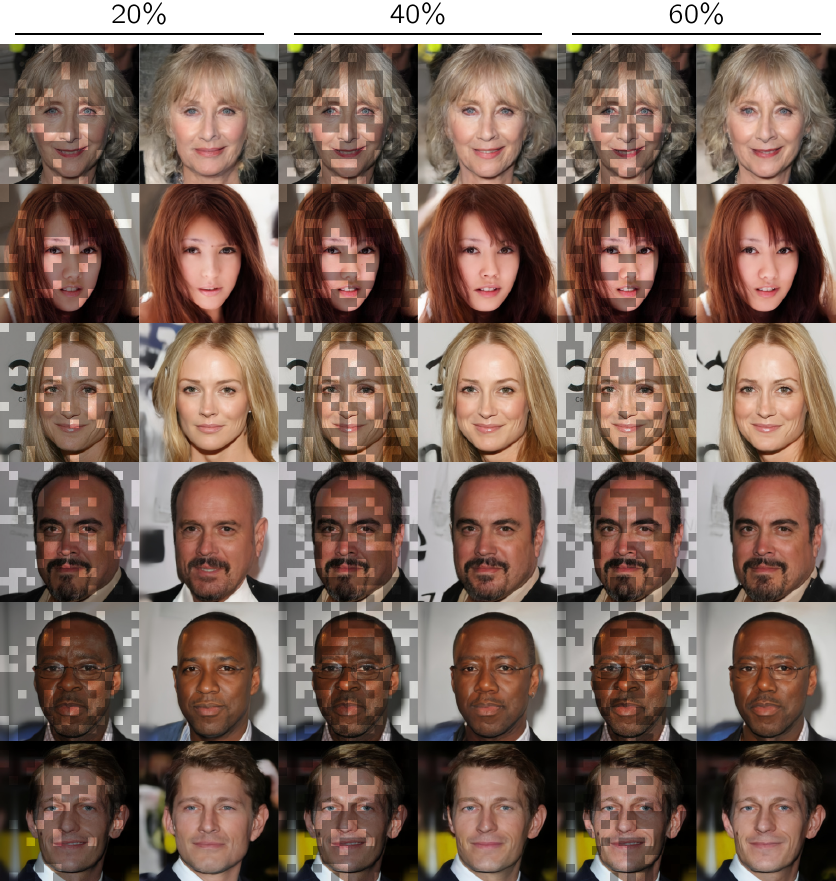}%
  \caption{%
    Inpainting experiments on CelebA-HQ ($256 \times 256$) validation set.
    Each column pair:
    To the left, the original image.
    Darkened regions are masked out and the model is conditioned on the remaining pixels.
    To the right, inpainting results after annealing the noise level from $\beta_{0}=1.0$ to $\beta_{T}=0.01$ in $T=100$ steps.
    The number above the column pair indicates the proportion of unmasked pixels in the input image.
  }%
  \label{fig:inpainting}
\end{figure}

Inpainting is the task of filling in missing regions of an image and is an important tool in image editing and restoration.
Noise kernels can be conditioned on existing data without retraining by fixing the denoising distribution $r_{\theta}$ to be a point mass at the observed image pixels.
Concretely, let $\bar{z}$ be the original image and $M$ a binary mask, where $M_{ij} = 1$ if pixel $(i, j)$ is to be inpainted and $M_{ij} = 0$ otherwise.
Where $M_{ij} = 0$, we modify the denoising model $r_{\theta}$ to be a delta distribution centered at $\bar{z}$.
In the case of continuous data, we then have instead of \cref{eq:nk:continuous:mean,eq:nk:continuous:variance}:
\begin{align}
  m_{t+1}
  & = w x + (\alpha_{t+1} - \alpha_{t}w) (
    \nonumber \\
  & \qquad
    M \mu_{\theta}(x, \alpha_{t}) + (1 - M) \bar{z} )
  \\
  s_{t+1}^{2}
  & = (\beta_{t+1} - w^{2}\beta_{t}) +
    \nonumber \\
  & \qquad
    (\alpha_{t+1} - \alpha_{t}w)^{2} M \sigma^{2}_{\theta}(x, \alpha_{t})
    .
\end{align}

\Cref{fig:inpainting} shows inpainting results on CelebA-HQ where we have masked out randomly selected $16 \times 16$ tiles of the original images.
We find that the model is able to generate coherent and realistic inpaintings even when large parts of the original image has been masked out.
Additional inpainting results for different mask types are provided in \cref{app:inpainting}.

\section{Related work}

\paragraph{Walkback and variational walkback}

Contrastive adjustment is closely related to the walkback algorithm in Generative Stochastic Networks (GSNs)~\cite{alain15_gsns}.
GSNs form a Markov chain by alternating sampling from a corruption process and a denoising distribution.
The walkback algorithm runs the chain forward a variable number of steps starting with an example from the dataset and then updates the denoising distribution to increase the backward log-likelihood of the original example.
In contrast to contrastive adjustment, GSNs use different forward and backward models and do not contrast the transition probabilities of time-adjacent states.
Instead, GSNs are trained so that the denoising model can undo, or ``walk back'' from, the noise generated by the forward process in one step.

The variational walkback algorithm~\cite{goyal2017variational} introduces a finite Markov chain that transitions from low- to high-temperature sampling.
Similar to contrastive adjustment, variational walkback uses the same forward and backward models.
However, variational walkback does not attempt to learn an equilibrium process, where the transition operator obeys detailed balance.
Instead, the algorithm is derived from a variational objective that maximizes a lower bound on the evidence of the initial state of the chain.

\paragraph{Contrastive divergence}

The idea of contrasting examples from the data distribution with one-step MCMC perturbations induced by the model being learned is similar in spirit to contrastive divergence~\cite{hinton2002training}, which is used to estimate the gradient of the log-likelihood in energy-based models.
While the learned energy model can be used for MCMC-based sampling, contrastive divergence does not explicitly learn a transition kernel.

\paragraph{Diffusion models}

Sampling data by noise annealing has been studied extensively in recent works.
Denoising diffusion probabilistic models and their derivatives~\cite{sohl2015deep,ho2020denoising} have gained wide-spread popularity and achieved state-of-the-art results on many data synthesis tasks.
Notably, the continuous and categorical noise kernels studied in this work are similar to the non-Markovian inference models proposed by~\textcite{song2021denoising} but can also be used for stationary chains.
Additionally, score-based generative models, such as noise conditional score networks (NCSNs)~\cite{song2019generative}, can, similar to contrastive adjustment, learn stationary sampling distributions.
However, they are limited to Langevin sampling and do not allow arbitrary transition operators.
As a consequence, NCSNs cannot, for example, be used to model categorical data without extensive modifications~\cite{meng2022concrete}.

\section{Limitations and future work}

We have presented a first exposition of contrastive adjustment and its applications to generative modeling.
As such, there are several promising avenues for future research.

First, we have left a formal treatment of the convergence properties of contrastive adjustment to future work.
While we have empirically found contrastive adjustment to be well-behaved, it is not clear under which conditions it is guaranteed to converge and what the convergence rate is.

Second, we have studied the performance of contrastive adjustment only on image datasets.
Nevertheless, the flexibility of contrastive adjustment and its ability to model both continuous and discrete data domains make it highly promising also for other modalities, including text and audio.

Finally, similarities between noise kernels and denoising diffusion probabilistic models open up for several potential cross-breeding opportunities.
For example, recent progress on conditional diffusion models~\cite{wang22ddnm,zhang23controlnet} could be incorporated into our models to improve their generative capabilities.
We hope future work will explore these and other research directions.

\section{Conclusion}

We have proposed contrastive adjustment for learning Markov chains whose stationary distribution matches the data distribution.
Contrastive adjustment is easy to implement and can be applied to both continuous and discrete data domains.
A notable strength of our models is their ability to efficiently generate variants of provided data, making it possible to locally explore the data manifold and to bring in human competence and guidance in iterative human-computer design loops.
We have found their performance to be close to current state-of-the-art generative models for image synthesis and expect results could be improved by future adaptations and tuning.

\section*{Broader impact}

Our research adds to a growing body of work on neural generative models.
Powerful generative models are likely to have a large impact on society in the near future.
On the one hand, they have the potential to improve productivity and creative output in a wide range of professions.
On the other hand, they may also cause considerable harm in a number of ways.
In the short term, productivity gains from generative models may cause significant worker displacement.
In the longer term, several ethical issues may arise.
For example, given their ability to generate realistic representations of real-world objects and persons, generative models may be used to create misleading content that can be used to deceive the public or for other malicious purposes.
Furthermore, when used in decision-making systems, it is important to consider potential biases in the training data, as these may be propagated by the model and thus affect the fairness of the system.
Data verifiability and model explainability will likely be key topics in mitigating these downsides.
Overall, given the disruptive potential of generative models, it is crucial that they are deployed with care and that the research community continues to engage in an open dialogue with the public about their capabilities and limitations in order to minimize potential risks while enabling the large potential benefits they stand to give to society.

\section*{Software and data availability}

Code for the experiments in this paper is available at \url{https://github.com/ludvb/nkca}.
The CIFAR-10~\cite{krizhevsky2009learning}, CelebA-HQ~\cite{karras17progressive}, and LSUN Church~\cite{yu15lsun} datasets are available from their respective official sources.

\section*{Acknowledgments}

This project has received funding from the European Research Council (ERC) under the European Union’s Horizon 2020 research and innovation programme (grant agreement no. 101021019). This work was also supported by the Knut and Alice Wallenberg foundation, the Erling-Persson family foundation, the Swedish Cancer Society, and the Swedish Research Council.

{\printbibliography}

\appendix
\onecolumn

\section{Experimental details}%
\label{app:implementation}

The denoising model $r_{\theta}$ largely follows the architecture used in~\textcite{hoogeboom2021argmax}, which is a U-Net~\cite{ronneberger2015u} with two residual blocks and a residual linear self-attention layer at each resolution.
We make some minor modifications by adding ReZero~\cite{bachlechner21rezero} to every residual connection, removing the dropout layers, and moving the self-attention layers to between the residual blocks.
Following~\cite{ho2020denoising}, the noise level is encoded using a sinusoidal position embedding~\cite{vaswani17attention} that is added to the data volume in each residual block.

Our CIFAR-10 and LSUN Church models use a channel size of $128$ and a 4-level deep U-Net with channel multipliers $(1, 2, 4, 8)$ and consist of $118$ million parameters.
Our CelebA-HQ model uses a channel size of $256$ and a 6-level deep U-Net with channel multipliers $(1, 1, 2, 2, 4, 4)$ and consists of $272$ million parameters.

Hyperparameter selection for $w$ and the final noise level $\beta_{T}$ were performed on CIFAR-10 with line search using FID as the evaluation metric.
Selected values were reused for the other datasets.

The CelebA-HQ model was trained for $\num{437500}$ iterations ($500$ epochs) with a batch size of $32$ on four NVIDIA A100 GPUs with no data augmentation.
The LSUN Church model was trained for $\num{1972500}$ iterations ($500$ epochs) with a batch size of $32$ on a single NVIDIA A100 GPU with no data augmentation.
The continuous CIFAR-10 model was trained for $\num{273700}$ iterations ($700$ epochs) with a batch size of $128$ on a single NVIDIA A100 GPU with random horizontal flips.
The categorical CIFAR-10 model was trained for $\num{49000}$ iterations ($500$ epochs) with a batch size of $512$ on a single NVIDIA A100 GPU with random horizontal flips.
We used the Adam optimizer~\cite{kingma2014adam} with a learning rate of $\num{1e-4}$ in all experiments except for the CelebA-HQ model, where we used a learning rate of $\num{2e-5}$.
Evaluation weights were computed as exponential moving averages of training weights with a decay rate of $\num{0.999}$.

FID and Inception scores were computed using Torchmetrics~\cite{torchmetrics}.
All evaluations were made against the training set of the respective dataset.
The generated dataset sizes were the same as the training set sizes, i.e., $\num{50000}$ for CIFAR-10, $\num{126227}$ for LSUN Church, and $\num{28000}$ for CelebA-HQ.

\section{Proofs}%
\label{app:proofs}

\gaussianmarginal*

\begin{proof}
  The density of the joint distribution can be written as
  \begin{align}
    p(X, Y)
    & \propto
      \exp\left(
      -\frac{(X - \mu)^{2}}{2 \sigma^{2}} - \frac{(Y - (aX + b))^{2}}{2 c^{2}}
      \right)
      \\
    & \propto
      \exp\left(
      - \left(\frac{1}{2\sigma^{2}} + \frac{a^{2}}{2c^{2}}\right) X^{2}
      - \frac{1}{2c^{2}} Y^{2}
      + \frac{a}{c^{2}} XY
      + \left(\frac{\mu}{\sigma^{2}} - \frac{ab}{c^{2}}\right) X
      + \frac{b}{c^{2}} Y
      \right)
      \label{eq:gaussianmarginal:joint}
      .
  \end{align}
  We therefore have
  \allowdisplaybreaks%
  \begin{align}
    p(Y)
    & = \int p(X, Y) \; \mathrm{d}X
      \\
    & \propto \int
      \exp\left(
        - \left(\frac{1}{2\sigma^{2}} + \frac{a^{2}}{2c^{2}}\right) X^{2}
        - \frac{1}{2c^{2}} Y^{2}
        + \left(\frac{\mu}{\sigma^{2}} + \frac{a(Y - b)}{c^{2}}\right) X
        + \frac{b}{c^{2}} Y
      \right)
      \; \mathrm{d}X
    \\
    & =
      \exp\left(
        - \frac{1}{2c^{2}} Y^{2} + \frac{b}{c^{2}} Y
      \right)
      \int
      \exp\Biggl(
      - \left(\frac{1}{2\sigma^{2}} + \frac{a^{2}}{2c^{2}}\right)
      \nonumber \\
    & \hspace{2.5cm}
      \left(
        X^{2}
        - 2 \left(\frac{\mu}{2\sigma^{2}} + \frac{a(Y - b)}{2c^{2}}\right)
        \bigg/ \left(\frac{1}{2\sigma^{2}} + \frac{a^{2}}{2c^{2}}\right)
        X
      \right)
      \Biggr)
      \; \mathrm{d}X
    \\
    & =
      \exp\left(
        - \frac{1}{2c^{2}} Y^{2} + \frac{b}{c^{2}} Y
      \right)
      \exp\left(
        \left(\frac{\mu}{2\sigma^{2}} + \frac{a(Y - b)}{2c^{2}}\right)^{2}
        \bigg/ \left(\frac{1}{2\sigma^{2}} + \frac{a^{2}}{2c^{2}}\right)
      \right)
      \nonumber \\
    & \hspace{2.5cm}
      \int
      \exp\left(
        -\left(\frac{1}{2\sigma^{2}} + \frac{a^{2}}{2c^{2}} \right)
        \left(
          X
          - \left(\frac{\mu}{2\sigma^{2}} + \frac{a(Y - b)}{2c^{2}}\right)
          \bigg/ \left(\frac{1}{2\sigma^{2}} + \frac{a^{2}}{2c^{2}}\right)
        \right)^{2}
      \right)
      \; \mathrm{d}X
    \\
    & \propto
      \exp\left(
        - \frac{1}{2c^{2}} Y^{2}
        + \frac{b}{c^{2}} Y
        + \left(\frac{\mu}{2\sigma^{2}} + \frac{a(Y - b)}{2c^{2}}\right)^{2}
        \bigg/ \left(\frac{1}{2\sigma^{2}} + \frac{a^{2}}{2c^{2}}\right)
      \right)
      \label{eq:proof:gaussianmarginal:summedintegral}
    \\
    & \propto
      \exp\left(
        - \left(
          \frac{1}{2c^{2}}
          - \frac{a^{2}}{4c^{4}} \bigg/ \left(\frac{1}{2\sigma^{2}} + \frac{a^{2}}{2c^{2}}\right)
        \right) Y^{2}
        + \left(
          \frac{b}{c^{2}}
          + \frac{a}{c^{2}} \left(\frac{\mu}{\sigma^{2}} - \frac{ab}{c^{2}}\right)
          \bigg/ \left(\frac{1}{\sigma^{2}} + \frac{a^{2}}{c^{2}}\right)
        \right) Y
      \right)
    \\
    & =
      \exp\left(
        - \frac{1}{2(c^{2} + a^{2} \sigma^{2})} Y^{2}
        + 2 \frac{a \mu + b}{2(c^{2} + a^{2} \sigma^{2})} Y
      \right) \\
    & \propto
      \exp\left(
        - \frac{\left(Y - (a \mu + b)\right)^{2}}{2(c^{2} + a^{2} \sigma^{2})}
      \right)
      \label{eq:proof:gaussianmarginal:final}
      ,
  \end{align}
  where the proportionality~\eqref{eq:proof:gaussianmarginal:summedintegral} follows from the fact that the integrand is proportional to a Gaussian density and the integral of a density over its support is equal to one.
  \Cref{eq:proof:gaussianmarginal:final} means $p(Y) = C N(Y \mid a\mu + b, c^{2} + a^{2}\sigma^{2})$ for some constant $C$.
  Since $p(Y)$ is a density function, we must have $C = 1$, completing the proof.
\end{proof}

\gaussiandb*

\begin{proof}
  Consider an element $i$ of the data.
  From \cref{eq:nk:continuous:conditional,eq:nk:continuous:alpha,eq:nk:continuous:beta}, we have
  \begin{align}
    p(y_{i} \mid z, x_{i}) = N(y_{i} \mid wx_{i} + (1 - w)\alpha z_{i}, (1 - w^{2}) \beta).
    \label{eq:proof:gaussiandb:conditional}
  \end{align}
  Using~\eqref{eq:gaussianmarginal:joint} with $X = x_{i}$, $Y = y_{i}$, $\mu = \alpha z_{i}$, $\sigma^{2} = \beta$, $a = w$, $b = (1 - w) \alpha z_{i}$, and $c^{2} = (1 - w^{2}) \beta$,
  \begin{align}
    p(x_{i} \mid z)p(y_{i} \mid z, x_{i})
    & \propto
      \exp\biggl(
      - \left(\frac{1}{2\beta} + \frac{w^{2}}{2(1 - w^{2}) \beta}\right) x_{i}^{2}
      - \frac{1}{2(1 - w^{2}) \beta} y_{i}^{2}
      + \frac{w}{(1 - w^{2}) \beta} x_{i}y_{i}
      \nonumber \\
    & \hspace{4.0cm}
      + \left(\frac{\alpha z_{i}}{\beta} - \frac{w (1 - w) \alpha z_{i}}{(1 - w^{2}) \beta}\right) x_{i}
      + \frac{(1 - w) \alpha z_{i}}{(1 - w^{2}) \beta} y_{i}
      \biggl)
      \\
    & =
      \exp\left(
        - \frac{1}{2(1 - w^{2}) \beta} \left(x_{i}^{2} + y_{i}^{2}\right)
        + \frac{w}{(1 - w^{2}) \beta} x_{i}y_{i}
        + \frac{\alpha z_{i}}{(1 + w) \beta} \left(x_{i} + y_{i}\right)
      \right)
      \label{eq:proof:gaussiandb:joint}
      .
  \end{align}
  We can see that~\eqref{eq:proof:gaussiandb:joint} is symmetric in $x_{i}$ and $y_{i}$, whereby $p(x_{i} \mid z)p(y_{i} \mid z, x_{i}) = p(y_{i} \mid z)p(x_{i} \mid z, y_{i})$ for all $x_{i},y_{i}$.
  It follows that
  \begin{align}
    p(x \mid z)p(y \mid z, x)
    =
    \prod_{i} p(x_{i} \mid z)p(y_{i} \mid z, x_{i})
    =
    \prod_{i} p(y_{i} \mid z)p(x_{i} \mid z, y_{i})
    =
    p(x \mid z)p(y \mid z, x)
    ,
  \end{align}
  as required.
\end{proof}

\categoricaldb*

\begin{proof}
  Consider an element $i$ of the data.
  Sampling $X_{t}^{i}$ and $X_{t+1}^{i}$ by \cref{eq:nk:categorical:noise,eq:nk:categorical:conditional}, there are four possible outcomes: $(X_{t}^{i}, X_{t+1}^{i}) = (z_{i}, z_{i})$, $(X_{t}^{i}, X_{t+1}^{i}) = (K + 1, K + 1)$, $(X_{t}^{i}, X_{t+1}^{i}) = (z_{i}, K + 1)$, and $(X_{t}^{i}, X_{t+1}^{i}) = (K + 1, z_{i})$.
  The first two outcomes are symmetric in $X_{t}^{i}$ and $X_{t+1}^{i}$ and therefore trivially balanced.
  It remains to show that the last two outcomes have equal probability.
  Using~\eqref{eq:nk:categorical:beta} and the fact that $\beta_{t} = \beta$ for all $t$, we have $b = \beta (1 - w) / (1 - w \beta)$ and hence
  \begin{align}
    p(X_{t}^{i} = z_{i} \mid z) p(X_{t+1}^{i} = K + 1 \mid z, X_{t}^{i} = z_{i})
    & = (1 - \beta) b
      \\
    & = \beta (1 - w) (1 - \beta) / (1 - w \beta)
      \\
    p(X_{t}^{i} = K + 1 \mid z) p(X_{t+1}^{i} = z_{i} \mid z, X_{t}^{i} = K + 1)
    & = \beta (1 - b) (1 - w)
      \\
    & = \beta (1 - w) (1 - (1 - w)\beta  / (1 - w \beta))
      \\
    & = \beta (1 - w) (1 - \beta) / (1 - w \beta)
      .
  \end{align}
  Therefore, $p(x_{i} \mid z)p(y_{i} \mid z, x_{i}) = p(y_{i} \mid z)p(x_{i} \mid z, y_{i})$ for all $x_{i},y_{i}$.
  Since $p(x \mid z)$ and $p(y \mid z, x)$ are element-wise factorized, this also means that
  \begin{align}
    p(x \mid z)p(y \mid z, x)
    =
    \prod_{i} p(x_{i} \mid z)p(y_{i} \mid z, x_{i})
    =
    \prod_{i} p(y_{i} \mid z)p(x_{i} \mid z, y_{i})
    =
    p(y \mid z)p(x \mid z, y)
    ,
  \end{align}
  as required.
\end{proof}

\newpage
\section{Extended image synthesis results: LSUN Church}%
\label{app:imagesynthesis:lsunchurch}
\begin{figure}[h!]
  \centering
  \includegraphics[width=\linewidth]{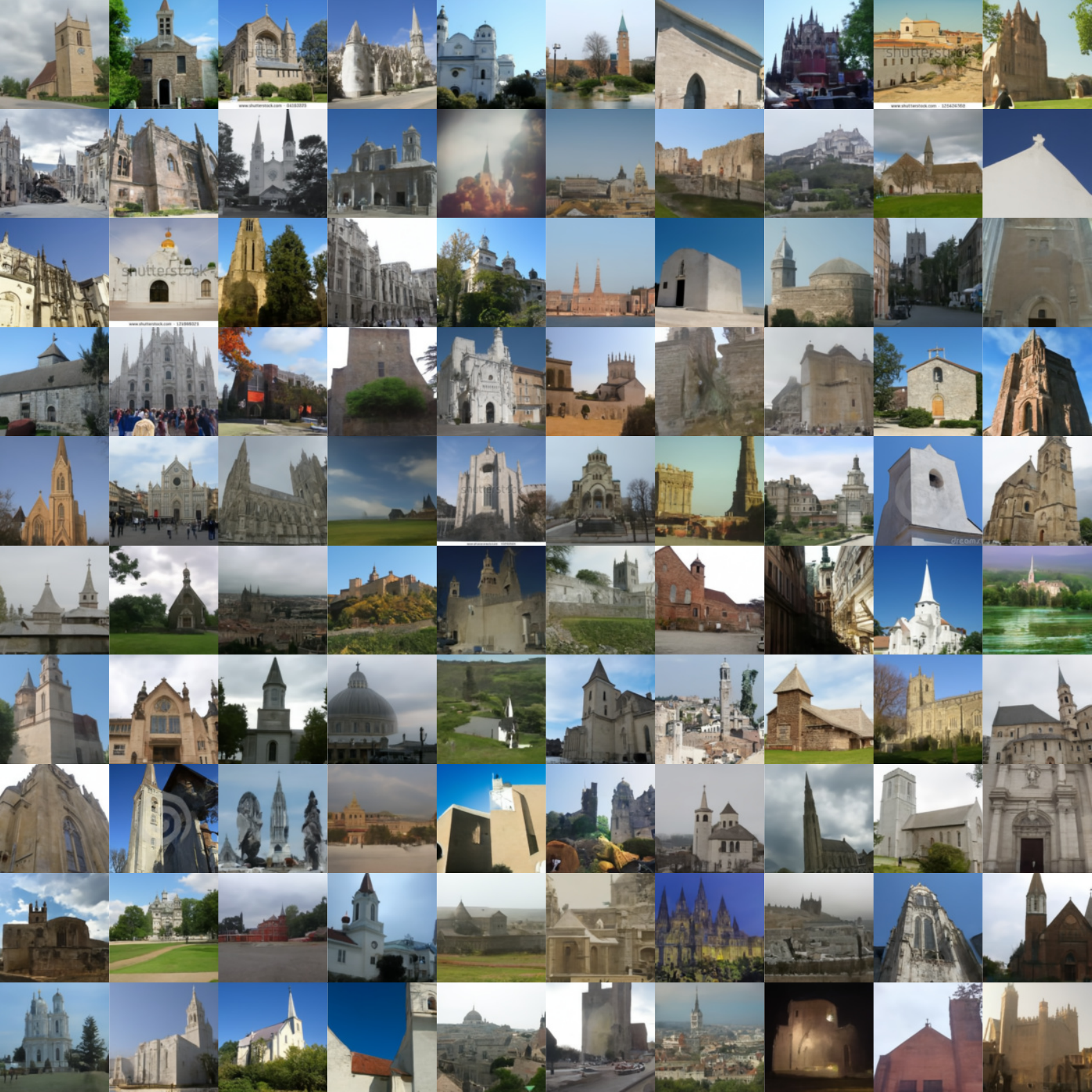}%
  \caption{
    Uncurated samples of a continuous noise kernel trained with contrastive adjustment on LSUN Church ($128 \times 128$).
    FID:~$9.40$.
  }%
\end{figure}

\newpage
\section{Extended image synthesis results: CelebA-HQ}%
\label{app:imagesynthesis:celebahq}
\begin{figure}[h!]
  \centering
  \includegraphics[width=\linewidth]{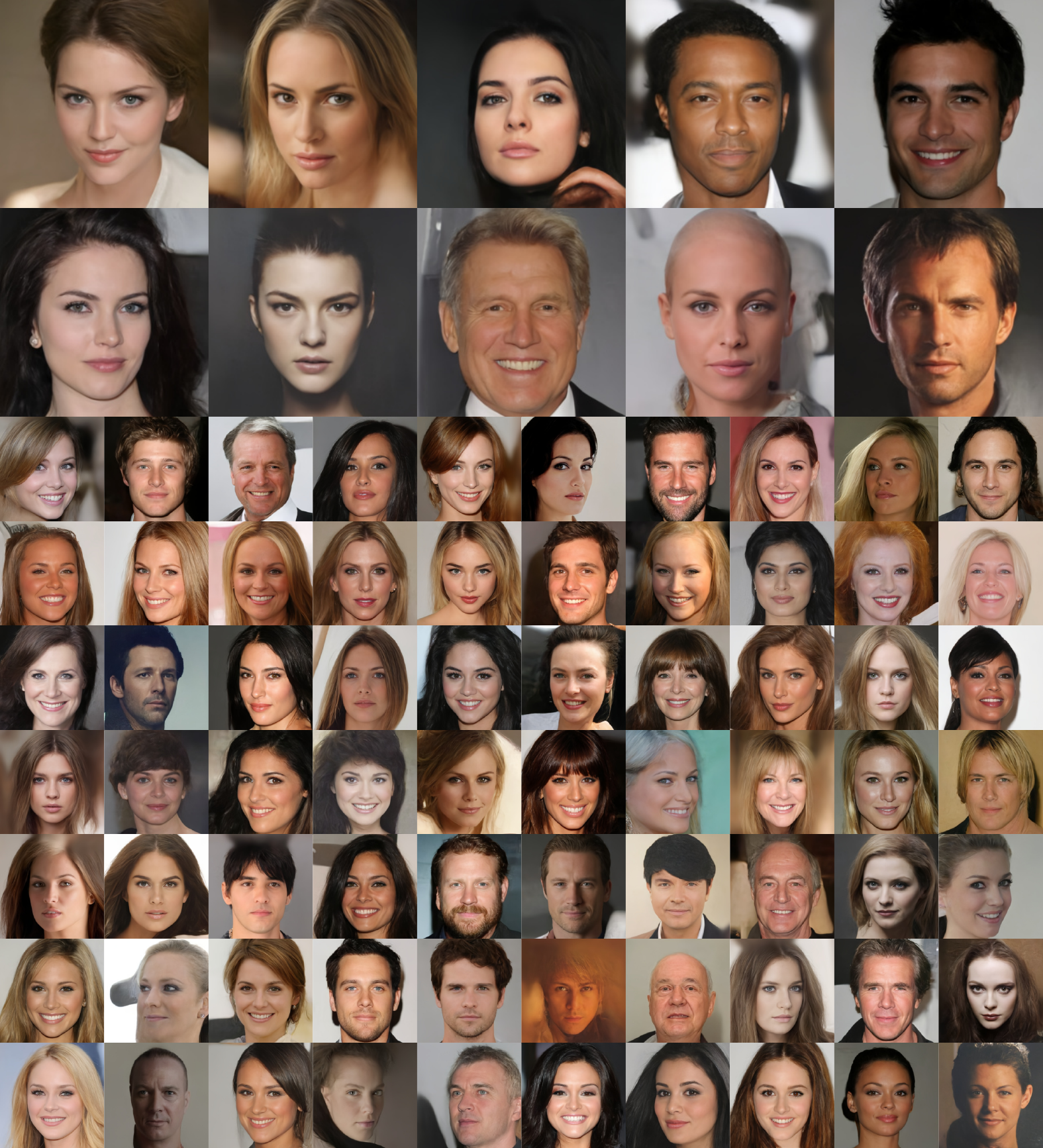}%
  \caption{
    Uncurated samples of a continuous noise kernel trained with contrastive adjustment on CelebA-HQ ($256 \times 256$).
    FID:~$31.04$.
  }%
\end{figure}

\newpage
\section{Image synthesis results: CIFAR-10}%
\label{app:imagesynthesis:cifar10}

\begin{figure}[h!]
  \centering
  \includegraphics[width=\linewidth]{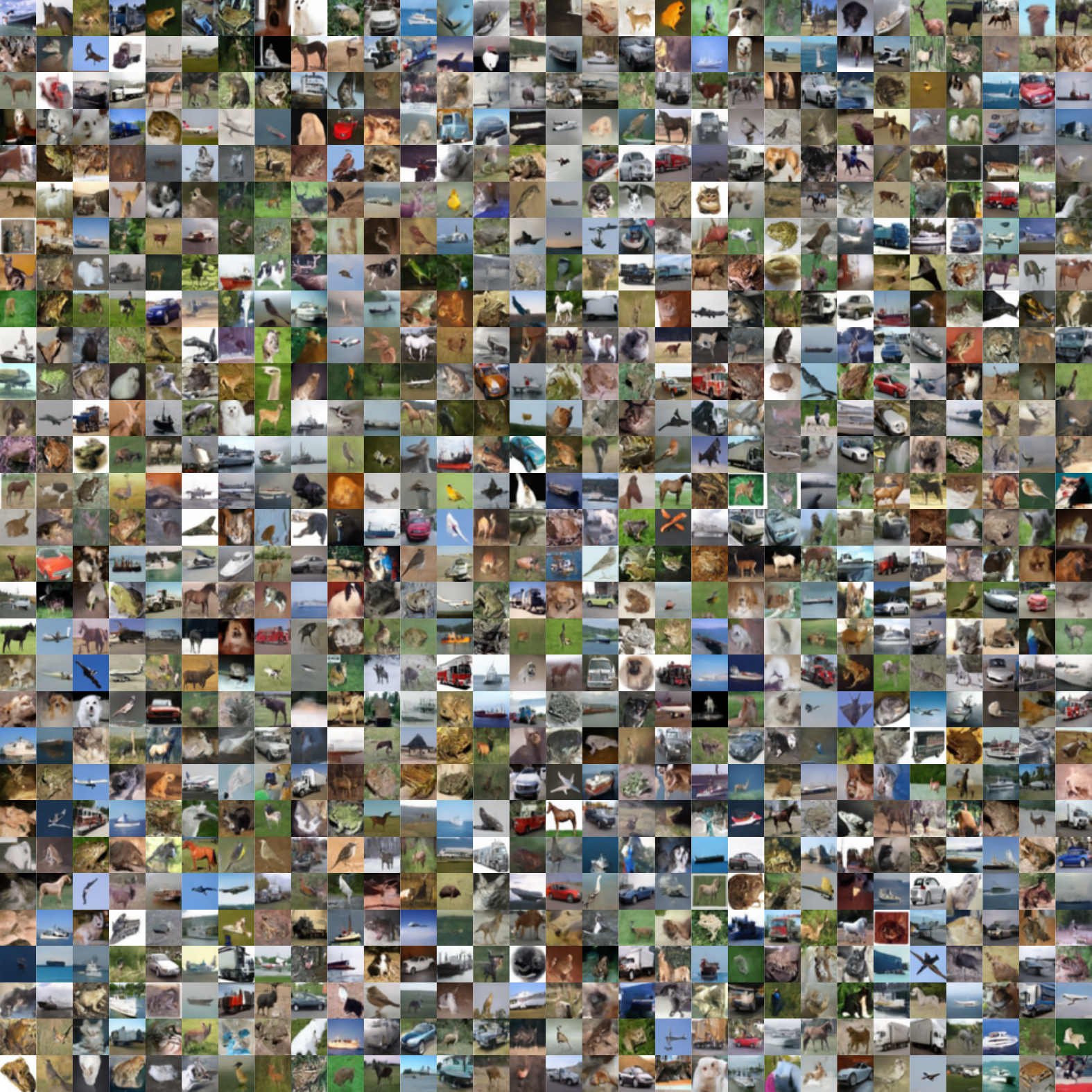}%
  \caption{
    Uncurated samples of a continuous noise kernel trained with contrastive adjustment on CIFAR-10.
    Best performing model out of five independent runs.
    FID:~$17.39$. Inception score:~$8.16 \pm 0.11$.
  }%
\end{figure}

\newpage
\section{Image synthesis results: CIFAR-10 (categorical)}%
\label{app:imagesynthesis:cifar10:categorical}

\begin{figure}[h!]
  \centering
  \includegraphics[width=\linewidth]{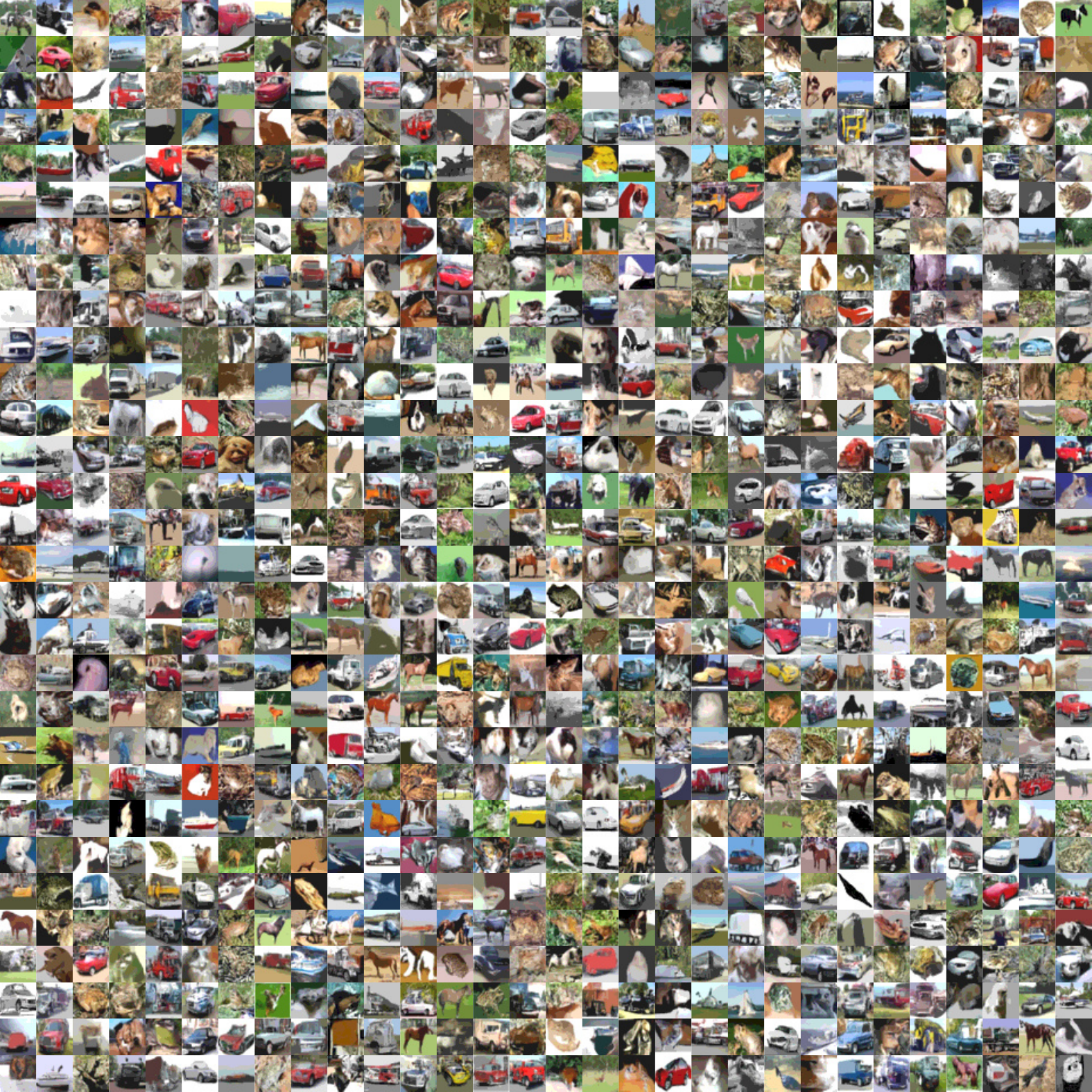}%
  \caption{
    Uncurated samples of a categorical noise kernel trained with contrastive adjustment on CIFAR-10 discretized to 10 intensity levels per color channel.
    FID:~$14.76$. Inception score:~$7.45 \pm 0.13$
  }%
\end{figure}

\newpage
\section{Extended variant generation results}%
\label{app:imagevariants}

\begin{figure}[h!]
  \centering
  \includegraphics[width=\linewidth]{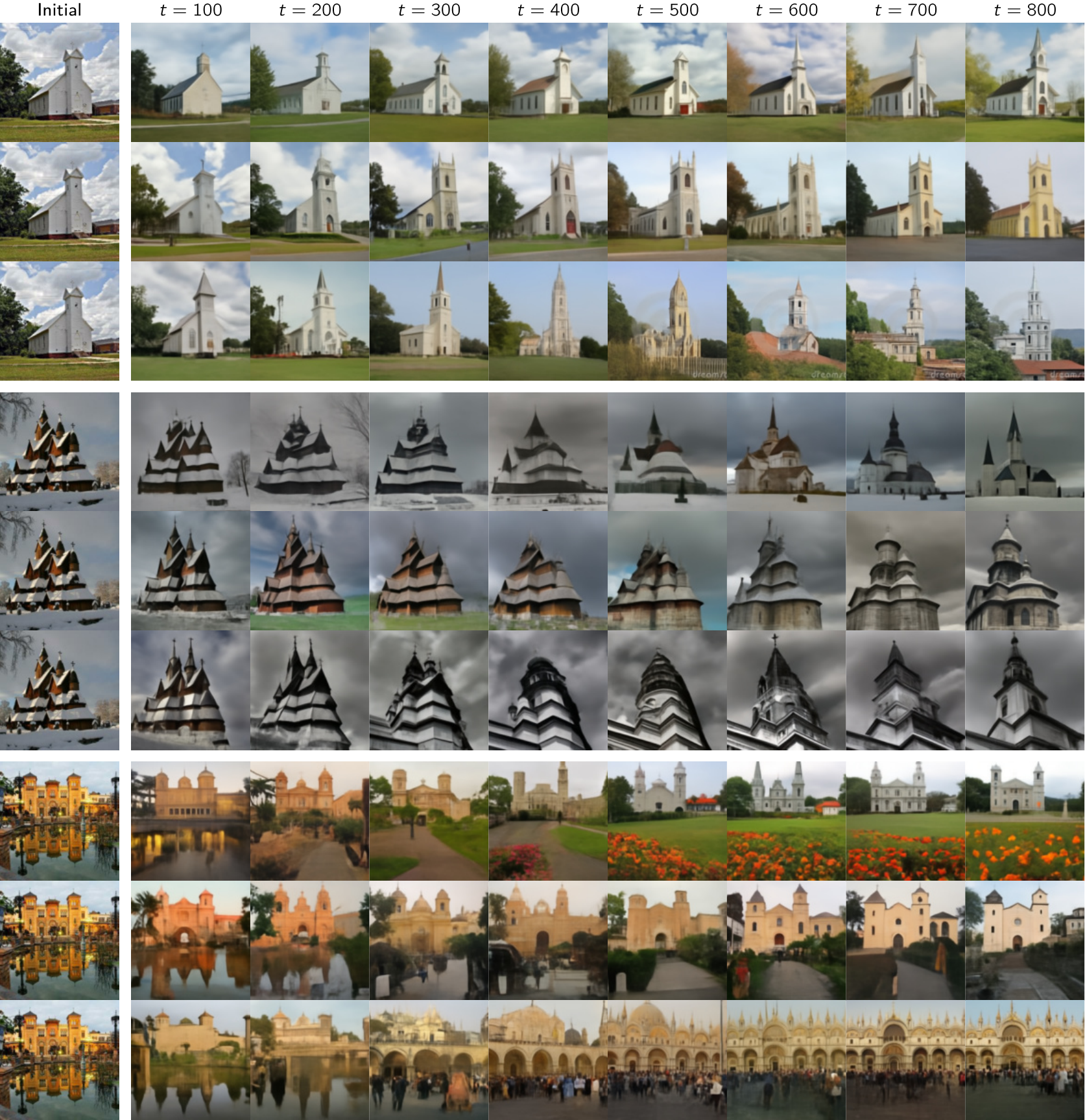}%
  \caption{
    Variant generation traces of a noise kernel trained with contrastive adjustment on LSUN Church.
    Left: Initial samples from dataset.
    Right: Progressive variants.
    Each row shows an independent run.
  }%
\end{figure}

\newpage
\section{Extended inpainting results}%
\label{app:inpainting}

\begin{figure}[h!]
  \centering
  \includegraphics[width=\linewidth]{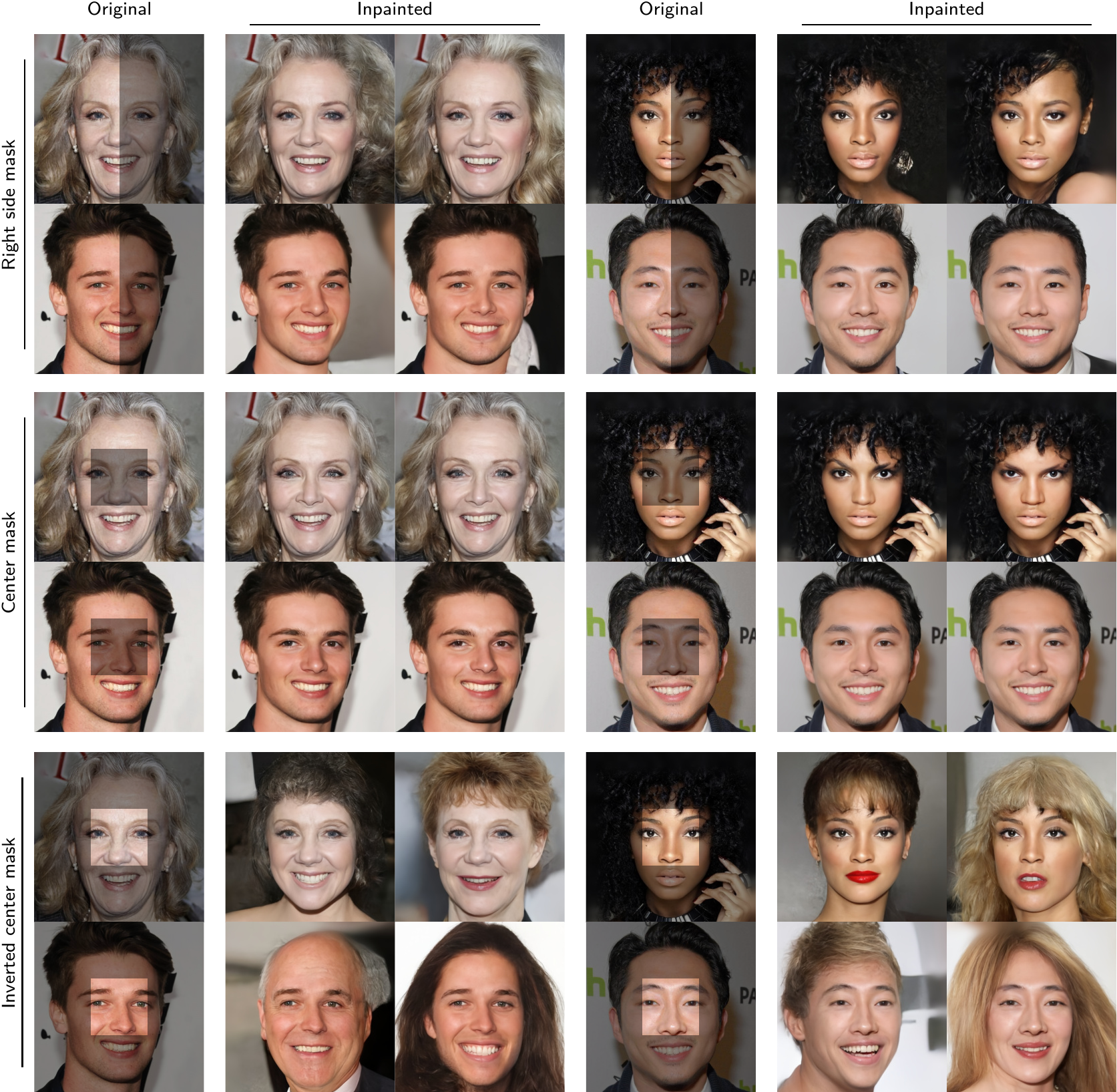}%
  \caption{
    Inpainting results on CelebA-HQ validation set.
    Darkened regions in the original images have been masked out and inpainted by a noise kernel conditioned on the remaining image.
  }%
\end{figure}

\end{document}